\documentclass[reqno, 11pt]{article}	
\usepackage{geometry}                
\geometry{letterpaper}                   
\usepackage[parfill]{parskip}    
\usepackage{graphicx,subfigure}
\usepackage{amssymb, amsmath, amsthm}
\usepackage{epstopdf}
\usepackage{multibib}
\usepackage{mathrsfs}
\usepackage[T1]{fontenc}
\usepackage[utf8]{inputenc}
\usepackage{authblk}
\DeclareGraphicsRule{.tif}{png}{.png}{`convert #1 `dirname #1`/`basename #1 .tif`.png}

\newcommand{\R}{\mathbb{R}}

\newcommand{\E}{\mathbb{E}}
\newcommand{\N}{\mathbb{N}}

\newtheorem{theorem}{Theorem}[section]

\newtheorem{lemma}[theorem]{Lemma}
\newtheorem{definition}[theorem]{Definition}


\title{Bigeometric Organization of Deep Nets}
\author[1]{Alexander Cloninger}
\author[1]{Ronald R. Coifman}
\author[2]{Nicholas Downing}
\author[2]{Harlan M. Krumholz}
\affil[1]{Applied Mathematics Program, Yale University}
\affil[2]{Center for Outcomes Research and Evaluation, Yale University}


\date{}                                           

\begin{document}
\maketitle

\begin{abstract}
In this paper, we build an organization of high-dimensional datasets that cannot be cleanly embedded into a low-dimensional representation due to missing entries and a subset of the features being irrelevant to modeling  functions of interest.  Our algorithm begins by defining coarse neighborhoods of the points and defining an expected empirical function value on these neighborhoods.  We then generate new non-linear features with deep net representations tuned to model the approximate function, and re-organize the geometry of the points with respect to the new representation.  Finally, the points are locally z-scored to create an intrinsic geometric organization which is independent of the parameters of the deep net, a geometry designed to assure smoothness with respect to the empirical function.  We examine this approach on data from the Center for Medicare and Medicaid Services Hospital Quality Initiative, and generate an intrinsic low-dimensional organization of the hospitals that is smooth with respect to an expert driven function of quality.  
\end{abstract}

 \section{Introduction}
Finding low dimensional embeddings of high dimensional data is vital in understanding the organization of unsupervised data sets.   However, most embedding techniques rely on the assumption that the data set is locally Euclidean \cite{coifman2006diffusion,Saul2003,Belkin2003}.  In the case that features carry implicit weighting, some features are possibly irrelevant, and most points are missing some subset of the features, Euclidean neighborhoods can become spurious and lead to poor low dimensional representations.  

In this paper, we develop the method of expert driven functional discovery to deal with the issue of spurious neighborhoods in data sets with high dimensional contrasting features.  This allows small amounts of input and ranking from experts to propagate through the data set in a non linear, smooth fashion.  We then build a distance metric based off these opinions that learns the invariant and irrelevant features from this expert driven function.  Finally, we locally normalize this distance metric to generate a global embedding of the data into a homogeneous space.


An example to keep in mind throughout the paper, an idea we expand upon in Section \ref{hospitals}, is a data set containing publicly-reported measurements of hospital quality.  The Center for Medicare and Medicaid Services Hospital Quality Initiative reports approximately 100 different measures describing various components of the quality of care provided at Medicare-Certified hospitals across the United States.  These features range in measuring hospital processes, patient experience, safety, rates of surgical complications, and rates of various types of readmission and mortality.
There are more than $5,000$ hospitals that reported at least on measure during 2014, but only $1,614$ hospitals with 90\% measures reported.  The measures are computed quarterly, and are publicly available through the Hospital Compare website \cite{hospitalcompare}.  The high dimensional nature of these varied measures make comprehensive inferences about hospital quality impossible without summarizing statistics. 
 

Our goal is more than just learning a ranking function $f$ on the set of hospitals $X$.  We are trying to characterize the cohort of hospitals and organize the geometry of the data set, and learn a multi-dimensional embedding of the data for which the ranking function is smooth.  This gives an understanding of the data that doesn't exist with a one dimensional ranking function.  Specifically, we are looking for meta-features of the data in order to build a metric $\rho: X\times X\rightarrow \R^{+}$ that induces a small Lipschitz constant on the function $f$, as well as on features measured by CMS.  

An example of this organization is shown in Figure \ref{fig:organizeFeaturesIntro}.  The organization is generated via our algorithm of expert driven functional discovery, the details of which are found in Sections \ref{organization} and \ref{DNN}.  The colors in each image correspond to three notable CMS features: risk standardized 30 day hospital-wide readmission, patient overall rating of the hospital, and  risk standardized 30 day mortality for heart failure.  This organization successfully separates hospitals into regimes for which each feature is relatively smooth.  
\begin{figure}[h!]
\begin{tabular}{ccc}
\includegraphics[width=.3\textwidth]{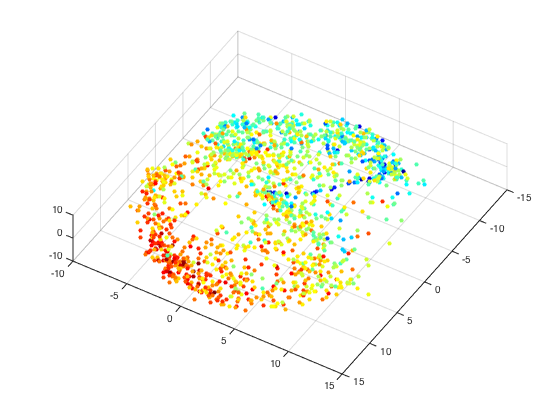} & 
\includegraphics[width=.3\textwidth]{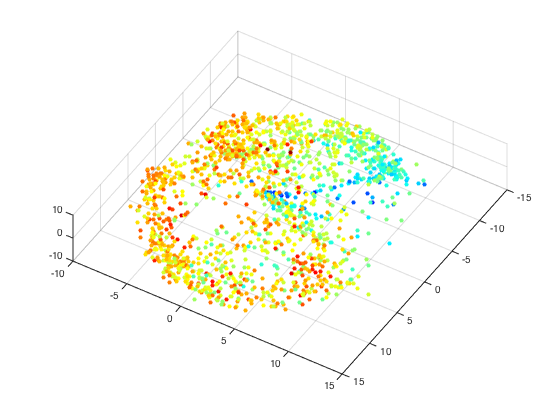} & 
\includegraphics[width=.3\textwidth]{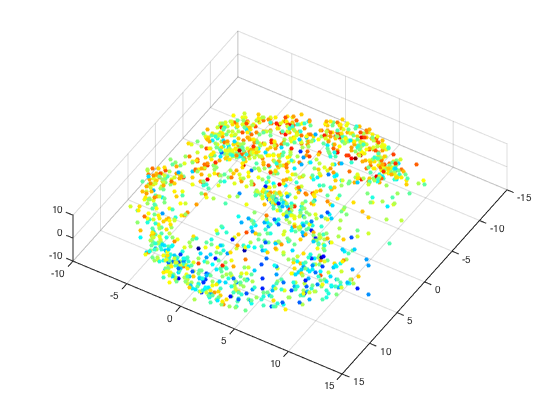} 
\end{tabular}
\caption{Organization colored by: (left)  risk standardized 30 day hospital wide readmission, (center) percent patients rating overall hospital 9 or 10 out of 10, (right)  risk standardized 30 day mortality for heart failure.  Embedding generated via bigeometric organization of deep nets.  Red is good performance, blue is bad.}\label{fig:organizeFeaturesIntro}
\end{figure}

Our organization is accomplished via a two step processing of the data.  First, we look to characterize the topology of the data set by creating a stable representation of $X$ such that neighborhoods can be differentiated at varying scales, and neighborhoods are preserved across varying parameters of the algorithm.  This is done in Sections \ref{organization} and \ref{DNN}, using a spin cycling of deep nets trained on some expert driven function.  Second, we build a metic on that topology by taking a local Mahalanobis distance on the stable neighborhoods (i.e. local z-scoring) \cite{mahalanobis,singer2008non}.  This is done in Section \ref{mahalanobis}, and guarantees that the induced metric is homogeneous.  This means that, if $U_x$ denotes the neighborhood of a point $x$, then $\rho(x,y)$ for $x,y\in U_x$ measures the same notion of distance as $\rho(x',y')$ for $x',y'\in U_{x'}$.

Figure \ref{fig:normalizedNeighborhoods} demonstrates the issue we refer to here.  The left image shows an organization of the hospitals in which neighborhoods have not been normalized.  The red points refer to ``good'' hospitals, and the blue refer to ``bad'' hospitals (in a naive sense).  The organization on the left gives no notion of the spread of the points and leaves certain questions unanswerable (i.e. are ``good'' hospitals as diverse as ``bad'' hospitals).  However, if we locally z-score the regions of the hospitals, as in the right figure, it the global notion of distance is normalized and it becomes clear that ``good'' hospitals share many more similarities amongst themselves than ``bad'' hospitals do among themselves.
 \begin{figure}
 \begin{tabular}{cc}
  \includegraphics[width=.45\textwidth]{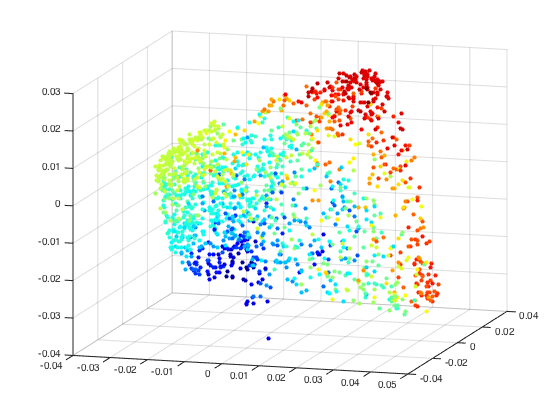} 
 \includegraphics[width=.5\textwidth]{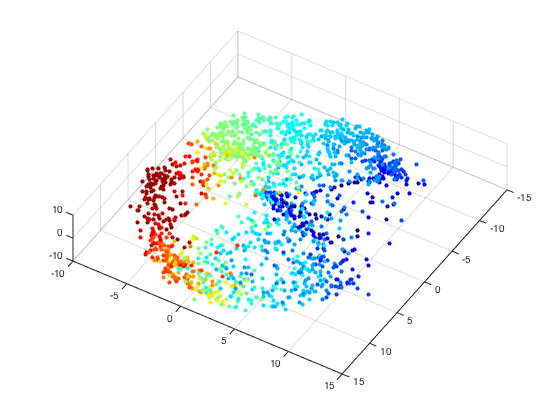} 
 \end{tabular}
 \caption{(left) unnormalized embedding, (right) z-scored embedding.}\label{fig:normalizedNeighborhoods}
 \end{figure}
 
 Also, by taking a local z-scoring of the features, we generate an organization that is dependent only on the neighborhoods $U_x$, rather than being dependent on the specific representations used.  Figure \ref{fig:mahalanobisEmbeddings} shows the organizations of the hospitals generated by algorithms with two very different parameter sets which, after z-scoring the neighborhoods, generate similar embeddings.  More details about this embedding can be found in Section \ref{initialRankings}.
 
  \begin{figure}
 \begin{tabular}{cc}
 \includegraphics[width=.5\textwidth]{diffEmbedding_smoothedStars_new.png} & 
  \includegraphics[width=.5\textwidth]{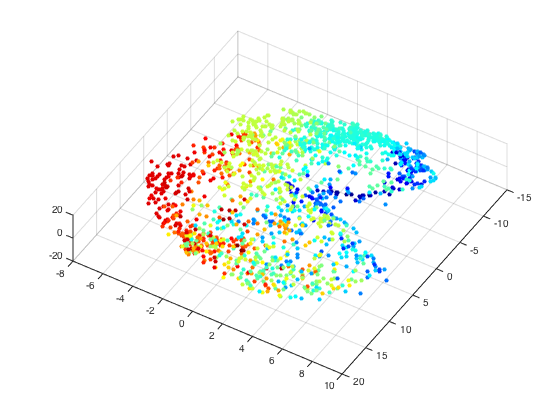}
 \end{tabular}
 \caption{Two sets of organizations with similar neighborhood structure.  The representations used to generate these embeddings are fundamentally different, with the right figure using $5\times$ as many features as the left, and with features being generated with vastly different algorithmic parameters.}\label{fig:mahalanobisEmbeddings}
 \end{figure} 
 
However, discovering the topology of the hospitals is non-trivial.  The features may have significant disagreement, and not be strongly correlated across the population.  To examine these relationships, one can consider linear correlations via principal component analysis.  The eigenvalues of the correlation matrix do not show the characteristic drop off shown in linear low dimensional data sets.  In fact, 76 of the 86 eigenvalues are above $1\%$ the size of the largest eigenvalue.  Previous medical literature has also detailed the fact that many of the features don't always correlate \cite{harlanReadMort,processOutcomes}.

For this reason, there does not exist an organization for which all features are smooth and monotonically increasing.  This is why the meta-features, and organization, must be driven by minimal external expert opinion.  This observation makes the goal of our approach three fold: develop an organization of the data that is smooth with respect to as many features as possible, build a ranking function $f$ that agrees with this organization, and minimize the amount of external input necessary to drive the system.

Our expert driven functional discovery algorithm blends the data set organization of diffusion maps and coupled partition trees with the rich set of non-linear features generated by deep learning and stacked neural nets.  We build an initial organization of the data via coupled partition trees \cite{questionnaire}, and use this partitioning to generate pseudopoints that accurately represent the span of the data at a coarse scale.  This step is explained in Section \ref{organization}.  This organization is analyzed and used as an input to a series of stacked neural nets, which learn invariant representations of the data and separate disparate clusters of points.  This step is explained in Section \ref{DNN}.  See \cite{deeplearningreview} for a review of stacked neural nets and deep learning.

It is important to note that our use of stacked neural nets is different from traditional deep learning applications.  We discuss these differences in Section \ref{diffTypeDNN}.  The purpose of using deep learning and organizing of the generated representations is to create a notion of fine neighborhoods between points; neighborhoods where the number of neighbors scales smoothly with the distance metric.

We then examine and validate our algorithm on the CMS Quality Initiative features in Section \ref{hospitals}.

 \section{Information Organization and Expert Driven Functional Discovery}\label{organization}
 \subsection{Training on Data with Full Knowledge}\label{initialQuestionnaire}
Let the data matrix be
\begin{eqnarray}\label{eq:M}
M = [v_{1}, ..., v_{N}],
\end{eqnarray}
where $v_{i}\in \R^{m}$ is a vector of observations describing the $i^{th}$ data point.  Each $v_{i}$ is allowed to have arbitrarily many missing entries.  Define supp$(v_{i}) = \{k\in\{1,...,m\} : v_{i,k} \text{ is observed}\}$.

Due to the missing entries, calculating an affinity between every two points $v_{i}$ and $v_{j}$ is not necessarily possible, given that the intersection of the supports of their known values may be small or even disjoint.  For this reason, we begin by restricting ourselves to points $v_{i_{j}}$ that have at most $\eta$ missing entries.  We shall begin by organizing the set of points $\Omega = [v_{i_{1}}, ... , v_{i_{n}}]$.

To gain an initial understanding of the geometry of $\Omega$, we consider the cosine affinity matrix $A$ where
\begin{eqnarray}
A_{j,k} = \frac{ \langle{v_{i_{j}}, v_{i_{k}}}\rangle }{\|v_{i_{j}}\| \|v_{i_{k}}\|},
\end{eqnarray}
where the inner product is calculated only on the entries in supp$(v_{i_{j}}) \cap$supp$(v_{i_{i}})$.  By definition of $\Omega$, this set contains at least $m-2\eta$ known values.

The cosine affinity matrix serves as a good starting point for learning the geometry of $\Omega$.  However, we must develop a way to extend any analysis to the full data $M$.  For this reason, we partition both the data points and the observation sets of $\Omega$.  This gives us two advantages: partitioning the data points captures the ways in which different observations may respond to different subsets of $\Omega$, and partitioning the observation sets into similar question groups gives a method for filling in the missing observations in $M$.
 
We construct a coupled geometry of $\Omega$ using the algorithm developed in \cite{questionnaire}.  The initial affinity is given by the cosine affinity matrix $A$, and the iterative procedure is updated using Earth Mover Distance \cite{earthmover}. 

Remark: Let the final affinity matrix be called $\widetilde{A}: M\times M \rightarrow [0,1]$.  Let the eigenpairs of $\widetilde{A}$ be called $\{(\lambda_i, \phi_i)\}$ with $1 = \lambda_0\ge ... \ge \lambda_N$.  Then the organization of $M$ is generated by 
\begin{eqnarray*}
\Phi^t(x) = [\lambda_1^t \phi_1(x), ..., \lambda_d^t \phi_d(x)], & x\in M,
\end{eqnarray*}
where $d$ is the dimension of the underlying manifold.

%
%
%

  \subsection{Filling in Missing Features}\label{missingFeatures}

   Let  $\mathscr{T}_{obs} $ be the hierarchical tree developed on the observations in $\R^{m}$ from Section \ref{initialQuestionnaire}.  Let the levels be $\mathscr{X}^{1}, ..., \mathscr{X}^{L}$, with the nodes for level $l$ named $\mathscr{X}_{1}^{l}, ..., \mathscr{X}_{n(l)}^{l}$.  Let $v_{i} \not\in \Omega$ be a data point with the entry $v_{i,k}$ missing.  In order to add $v_{i}$ into the geometry of $\Omega$, we must estimate the entries in $\left(\text{supp}(v_{i})\right)^{c}$ to calculate an affinity between $v_{i}$ and other points.
  
$\mathscr{T}_{obs}$ gives a tree of correlations between the observations.  This allows us to fill in $v_{i,k}$ with similar, known entries.  Find the lowest level of the tree (most strongly correlated questions) for which observation $k\in \mathscr{X}^{l}_{j}$ and $\exists m\in \mathscr{X}^{l}_{j}$ such that $v_{i,m}$ is known.  Then the estimate of $v_{i,k}$ satisfies
\begin{eqnarray}\label{eq:missingEntries}
\widetilde{v}_{i,k} = \frac{1}{|\mathscr{X}^{l}_{j}|}\sum_{m\in \mathscr{X}^{l}_{j}} v_{i,m}.
\end{eqnarray}
  
Along with an estimate of $v_{i,k}$, \eqref{eq:missingEntries} also gives a level of uncertainty for the estimate, as smaller $l$ (i.e. coarser folders) have lower correlation and give larger reconstruction error.
 
 \subsection{Expert Driven Function on the Folders}
 Let $\mathscr{T}_{points}$ be the hierarchical tree developed on the data points in $\Omega$ from Section \ref{initialQuestionnaire}.  Let the levels be $\mathscr{X}^{1}, ..., \mathscr{X}^{L}$, with the nodes for level $l$ named $\mathscr{X}_{1}^{l}, ..., \mathscr{X}_{n(l)}^{l}$.  As the partitioning becomes finer (i.e. $l$ approaches $L$), the folders contain more tightly clustered points.  This means that the distance from an point to a centroid of a folders becomes smaller as the partitioning becomes finer.  
 
Fix the level $l$ in the tree.  The centroids of these folders can be thought of as ``types'' of data points, or 
\emph{pseudopoints}.  There are two major benefits: there are a small number of pseudopoints relative to $n$ that span the entire data space, and the pseudopoints are less noisy and more robust to erroneous observations.  

These pseudopoints are the key to incorporating expert knowledge and opinion. The pseudopoints are easier and much faster to classify than individual points, as there are a small number and they are less noisy than individual points.  Also, the pseudopoints effectively synthesize the aggregate performance of multiple hospitals.  The classifications generated by experts can be varied, anything from quality rankings to discrete classes to several descriptive features or ``meta-features'' of the bins.  Specifically, the user assigns a set of classes $\mathscr{C}$ and a classification function $g:\Omega \rightarrow \mathscr{C}$ such that
\begin{eqnarray}
\forall x\in \mathscr{X}_{j}^{l}, & g(x)  = y_{j} \in \mathscr{C}.
\end{eqnarray}
This function is understood as a rough estimate, since the classification is applied to all $x\in  \mathscr{X}_{j}^{l}$ even though the class is determined only from the centroid of $\mathscr{X}_{j}^{l}$.

This step is non-traditional in unsupervised classification algorithms.  Traditionally, the clustering is done agnostic to the final goal of classification.  However, this does not allow for a second step correction of the original classification.  By allowing a user driven component that quickly examines the cluster centroids, we are able to learn an initial classification map for the points.  This provides a rough characterization of which clusters should be collapsed due to similar classification scores, which clusters should be separated further due to drastically different classification scores, and a method of determining class labels for points on the border of multiple clusters.

This function gives a rough metric $\rho:\Omega\times\Omega\rightarrow \R^{+}$ that has dependencies of the form
\begin{eqnarray}\label{eq:rho}
\rho(x,y) = f\left( x - y; g(x),g(y) \right).  & f:\R^{m} \times \mathscr{C}\times \mathscr{C} \rightarrow \R^{+}.
\end{eqnarray}
This metric needs to satisfy two main properties:
\begin{enumerate}
\item[1.] $\exists \delta_{0}$ such that $\rho(x,y) < \epsilon_{dist}$  if $\|x-y\|<\delta$ for $\delta < \delta_{0}$ and some norm $\|\cdot\|$, and 
\item[2.] $\rho(x,y) > \epsilon_{class}$ if $g(x) \neq g(y)$.
\end{enumerate} 
A metric that satisfies these two properties naturally relearns the most important features for preserving clusters while simultaneously incorporating expert knowledge to collapses non-relevant features.  We learn this function using neural nets, as described in Section \ref{DNN}.

 \section{Deep Learning to Form Meta-Features}\label{DNN}
 There are entire classes of functions that approximate the behavior of $\rho$ in \eqref{eq:rho}.  For our algorithm, we use neural nets for several reasons.  First, the weight vectors on the first layer of a neural net have clear, physical interpretation, as the weight matrix can be thought of as a non-linear version of the weight vectors from principal component analysis.  Second, current literature on neural nets suggest a need for incredibly large datasets to develop meaningful features on unsupervised data.  Our algorithm provides a way to turn an unlabeled data set into a semi-supervised algorithm, and incorporates this supervision into the nodes of the neural net.  This supervision appears to reduce the number of training points necessary to generate a non-trivial organization.  Past literature has used radial basis functions as the non-linear activation for deep learning\cite{kerneldeeplearning} , as well as for analysis of the structure of deep net representations \cite{mullerdeeplearning}.

 \subsection{Neural Nets with Back-Propagation of Rankings}
For our algorithm, we build a 2 layer stacked autoencoder with a sigmoid activation function, with a regression layer on top.  The hidden layers are defined as 
\begin{eqnarray*}
h^{(l)}(x) = \sigma\left(b^{(l)} + W^{(1)}h^{(l-1)}(x)\right),
\end{eqnarray*}
with $\sigma:\R\rightarrow [0,1]$ being a sigmoid function applied element-wise, and $h^{(0)}(x) = x$.  The output function $f(x)$ is logistic regression of these activation units 
\begin{eqnarray*}
f(x) = \sigma(b^{(3)} + V h^{(2)}(x)).
\end{eqnarray*}
The reconstruction cost function for training our net is an $L^{2}$ reconstruction error
\begin{eqnarray}\label{eq:cost}
C = \frac{1}{n} \sum_{i=1}^{n}\|g(x_{i}) - f(x_{i})\|^{2}.
\end{eqnarray}
The overall loss function we minimize, which combines the reconstruction cost with several bounds on the weights, is
\begin{eqnarray*}
L = \frac{1}{n} \sum_{i=1}^{n}\|g(x_{i}) - f(x_{i})\|^{2} + \mu \sum_l \sum_{i,j} \left( W_{i,j}^{(l)} \right)^2 
\end{eqnarray*}
Note that we rescale $g$ if it takes values outside $[0,1]$.  We then backpropagate the error by calculating $\frac{\delta L}{\delta w^{(l)}_{i,j}}$ and $\frac{\delta L}{\delta w^{(l)}_{i}}$ and adjusting the weights and bias accordingly.  See \cite{rojasneuralnet} for a full description of the algorithm.

\begin{definition}
The deep neural net metric on $\Omega$ with respect to an external function $f$ is defined as
\begin{eqnarray*}
\rho_{DNN}(x,y) = \|h^{(1)}(x) - h^{(1)}(y) \|.
\end{eqnarray*}
\end{definition}

\begin{lemma}\label{lemma:lipschitz}
A deep neural net with a logistic regression on top generates a metric $\rho_{DNN}$ that satisfies Condition 1 from \eqref{eq:rho} with a Lipschitz constant of $\|W_{1}\|/4$ with respect to Euclidean distance.  The output function $f$ also has a Lipschitz constant of  $\|W_{1}\|\|W_{2}\|\|V\|/64$ with respect to Euclidean distance.
\end{lemma}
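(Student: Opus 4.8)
The plan is to prove both Lipschitz estimates by the same elementary device: write each of the maps $h^{(1)}$, $h^{(2)}$, $f$ as an alternating composition of affine layers $z\mapsto b+Wz$ and the coordinatewise sigmoid $\sigma$, bound the Lipschitz constant of each factor with respect to the Euclidean norm, and multiply the constants.

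First I would record the single analytic fact about the activation. For $\sigma(t)=(1+e^{-t})^{-1}$ one has $\sigma'(t)=\sigma(t)\bigl(1-\sigma(t)\bigr)$, and since $u(1-u)\le \tfrac14$ for $u\in[0,1]$ with equality at $u=\tfrac12$ (i.e.\ at $t=0$), the mean value theorem gives $|\sigma(s)-\sigma(t)|\le \tfrac14|s-t|$ for all $s,t\in\R$. Applying $\sigma$ coordinatewise to vectors $z,w\in\R^k$ then yields $\|\sigma(z)-\sigma(w)\|^2=\sum_i|\sigma(z_i)-\sigma(w_i)|^2\le \tfrac1{16}\sum_i|z_i-w_i|^2=\tfrac1{16}\|z-w\|^2$, so the coordinatewise sigmoid is $\tfrac14$-Lipschitz on Euclidean space. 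An affine map $z\mapsto b+Wz$ is $\|W\|$-Lipschitz, where $\|W\|$ is taken throughout to be the operator norm induced by the Euclidean norm; this is just the definition of that norm.

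For $\rho_{DNN}$: the map $h^{(1)}(x)=\sigma\bigl(b^{(1)}+W^{(1)}x\bigr)$ is the $\tfrac14$-Lipschitz coordinatewise sigmoid composed with the $\|W_1\|$-Lipschitz affine map $x\mapsto b^{(1)}+W^{(1)}x$, hence $\tfrac{\|W_1\|}{4}$-Lipschitz, and therefore $\rho_{DNN}(x,y)=\|h^{(1)}(x)-h^{(1)}(y)\|\le \tfrac{\|W_1\|}{4}\|x-y\|$. To get Condition~1 of \eqref{eq:rho}, given $\epsilon_{dist}$ set $\delta_0=4\epsilon_{dist}/\|W_1\|$; then $\|x-y\|<\delta<\delta_0$ forces $\rho_{DNN}(x,y)<\epsilon_{dist}$, which is exactly what Condition~1 demands. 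For $f$ I would iterate the composition bound: $h^{(2)}(x)=\sigma\bigl(b^{(2)}+W^{(2)}h^{(1)}(x)\bigr)$ has Lipschitz constant $\tfrac14\cdot\|W_2\|\cdot\tfrac{\|W_1\|}{4}=\tfrac{\|W_1\|\|W_2\|}{16}$, and $f(x)=\sigma\bigl(b^{(3)}+Vh^{(2)}(x)\bigr)$ contributes a further factor $\tfrac14\cdot\|V\|$, giving $\tfrac{\|W_1\|\|W_2\|\|V\|}{64}$.

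The argument is essentially bookkeeping, so there is no deep obstacle; the only points that require a little care are the passage from the scalar bound $|\sigma(s)-\sigma(t)|\le\tfrac14|s-t|$ to the Euclidean-norm bound for the coordinatewise map (handled by the sum-of-squares computation above), and keeping every $\|W\|$ consistently equal to the Euclidean operator norm so that the affine steps contribute exactly $\|W\|$ and the constants $\tfrac14$ multiply cleanly across layers. I would also remark that $\rho_{DNN}$ needs no separate verification of the metric axioms beyond this estimate, since it is the pullback of the Euclidean metric under $h^{(1)}$ and is therefore automatically a (pseudo)metric.
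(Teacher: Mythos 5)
Your argument is correct and is essentially the same as the paper's: bound the coordinatewise sigmoid by its derivative $\sigma'=\sigma(1-\sigma)\le\tfrac14$ via the mean value theorem, treat each affine layer as $\|W\|$-Lipschitz in the Euclidean operator norm, and multiply the constants through the composition to get $\|W_1\|/4$ for $h^{(1)}$ (hence $\rho_{DNN}$) and $\|W_1\|\|W_2\|\|V\|/64$ for $f$. Your added details (the coordinatewise-to-vector-norm step and the explicit choice $\delta_0=4\epsilon_{dist}/\|W_1\|$ for Condition 1) are just careful bookkeeping of what the paper leaves implicit.
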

\begin{proof}
Let $\sigma(x) = \frac{1}{1 + e^{-x}}$.  Then $\frac{d\sigma}{d x} = \sigma(x) (1-\sigma(x)) \le \frac{1}{4}$.  By the mean value theorem, $|\sigma(a) - \sigma(b)| = |a - b| |\frac{d\sigma}{d x}(z)|\le \frac{|a - b|}{4}$.  Then 
\begin{eqnarray*}
\|f(x) - f(y)\|_{2} &\le& \|V\| \|h^{(2)}(x) - h^{(2)}(y)\|/4 \\
&\le& \|V\| \|W^{(2)}h^{(1)}(x) - W^{(2)}h^{(1)}(y)\| / 16 \\
& \le & \|V\| \|W^{(2)}\| \|h^{(1)}(x) - h^{(1)}(y)\| / 16\\
&\le& \|V\| \|W^{(2)}\| \|W^{(1)}\| \|x - y\|/64.
\end{eqnarray*}

The same argument applies for $\rho_{DNN}$.

\end{proof}

\begin{lemma}
A two layer neural net with a logistic regression on top creates a function $f$ which satisfies a variant of Condition 2 from \eqref{eq:rho}, namely that
\begin{eqnarray}\label{eq:expectedDiff}
\E_{\neq} \left(\|f(x) - f(y)\|^{2}\right)&\ge& \E_{\neq} \left(\|g(x) - g(y)\|^{2}\right) - 2 \left(\frac{\max_{i\in\mathscr{C}}S_{i} \cdot n}{S}\right)C,
\end{eqnarray}
where $S = \#\{(x,y)\in\Omega\times\Omega : g(x) \neq g(y)\}$, $S_{i} = \#\{y\in\Omega : g(y) \neq i\}$, and $\E_{\neq}$ is the expected value over the set $S$.

Moreover, the deep neural net $\rho_{DNN}$ generated also satisfies
\begin{eqnarray}\label{eq:expectedDiff}
\E_{\neq} \left(\|f(x) - f(y)\|^{2}\right) \ge \left( \frac{1}{\prod_{i=2}^{L} \|W_i\| } \right)  \left[ \E_{\neq} \left(\|g(x) - g(y)\|^{2}\right) - 2 \left(\frac{\max_{i\in\mathscr{C}}S_{i} \cdot n}{S}\right)C \right].
\end{eqnarray}
\end{lemma}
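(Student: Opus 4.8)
The plan is to prove the first inequality by a pointwise triangle‑inequality estimate that is then averaged over the pairs counted by $S$, and to deduce the second inequality by composing the first with the layerwise contraction bounds of Lemma~\ref{lemma:lipschitz}.

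First I would fix an ordered pair $(x,y)\in\Omega\times\Omega$ with $g(x)\neq g(y)$ and write $e(x)=g(x)-f(x)$ for the pointwise reconstruction error. The triangle inequality gives $\|g(x)-g(y)\|\le\|f(x)-f(y)\|+\|e(x)\|+\|e(y)\|$, and since $\sigma$ is $[0,1]$‑valued and $g$ has been rescaled into $[0,1]$, all terms here are bounded. Squaring and using a power‑mean inequality of the form $\left(t_1+t_2+t_3\right)^2\le 3\left(t_1^2+t_2^2+t_3^2\right)$ — or, exploiting boundedness, $2t_it_j\le t_i^2+t_j^2$ — converts this into a pointwise bound
\[
\|f(x)-f(y)\|^2 \;\ge\; \|g(x)-g(y)\|^2 \;-\; \left(\|e(x)\|^2+\|e(y)\|^2\right),
\]
where pinning down the constants in front of the two terms on the right is the crux of the argument.

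Next I would sum over all ordered pairs with $g(x)\neq g(y)$ and divide by $S$. The left‑hand side becomes $\E_{\neq}\!\left(\|f(x)-f(y)\|^2\right)$, the leading term on the right becomes $\E_{\neq}\!\left(\|g(x)-g(y)\|^2\right)$, and for the error term I would use that a fixed $x$ occurs as the first coordinate of exactly $S_{g(x)}$ such pairs and as the second coordinate of exactly $S_{g(x)}$ such pairs, so that
\[
\sum_{g(x)\neq g(y)}\left(\|e(x)\|^2+\|e(y)\|^2\right) \;=\; 2\sum_{x\in\Omega} S_{g(x)}\,\|e(x)\|^2 \;\le\; 2\left(\max_{i\in\mathscr{C}}S_i\right)\sum_{x\in\Omega}\|e(x)\|^2 \;=\; 2\left(\max_{i\in\mathscr{C}}S_i\right)n\,C ,
\]
the last step being the definition \eqref{eq:cost} of $C$. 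Dividing by $S$ produces exactly the correction term $2\!\left(\tfrac{\max_{i\in\mathscr{C}}S_i\cdot n}{S}\right)C$ and hence the first inequality.

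For the second inequality I would invoke the chain of contractions from the proof of Lemma~\ref{lemma:lipschitz}: each sigmoid layer is $\tfrac14$‑Lipschitz and each linear map $W_i$ contracts norms by at most $\|W_i\|$, so $\|f(x)-f(y)\|\le\kappa\left(\prod_{i=2}^{L}\|W_i\|\right)\rho_{DNN}(x,y)$ for an absolute constant $\kappa$. Squaring, applying $\E_{\neq}$, and substituting the first inequality then yields the stated lower bound for $\rho_{DNN}$, the factor $1/\prod_{i=2}^{L}\|W_i\|$ emerging from this last step (with absolute constants absorbed).

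The step I expect to be the main obstacle is the pointwise estimate, and with it the exact form of the correction term: the bare triangle inequality only controls $\|f(x)-f(y)\|$ in terms of $\|g(x)-g(y)\|$ up to the \emph{first} powers of $\|e(x)\|$ and $\|e(y)\|$, whereas the target involves $\sum_x\|e(x)\|^2=nC$; converting between the two requires carefully using that $f$ and $g$ are $[0,1]$‑valued so that no $\sqrt{C}$ loss is incurred. The combinatorial accounting with $S_{g(x)}$ and the transfer to $\rho_{DNN}$ via Lemma~\ref{lemma:lipschitz} are then routine.
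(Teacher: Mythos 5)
Your plan is structurally the same as the paper's proof: a pointwise comparison of $\|f(x)-f(y)\|^{2}$ with $\|g(x)-g(y)\|^{2}$ up to the squared errors $\|e(x)\|^{2},\|e(y)\|^{2}$, then summation over mismatched pairs with the multiplicity count $\sum_{g(x)\neq g(y)}\bigl(\|e(x)\|^{2}+\|e(y)\|^{2}\bigr)=2\sum_{x}S_{g(x)}\|e(x)\|^{2}\le 2\bigl(\max_{i\in\mathscr{C}}S_{i}\bigr)nC$ via \eqref{eq:cost}, division by $S$, and finally Lemma~\ref{lemma:lipschitz} to transfer the bound to $\rho_{DNN}$. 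That combinatorial accounting and the second inequality are exactly what the paper does (the paper is equally brief about the $\rho_{DNN}$ step; note that squaring the Lipschitz bound would naturally produce the square of $\prod_{i\ge 2}\|W_i\|$, so your ``constants absorbed'' remark matches the paper's level of precision there).

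The genuine gap is the step you yourself flag as the crux: the pointwise inequality $\|f(x)-f(y)\|^{2}\ge\|g(x)-g(y)\|^{2}-\bigl(\|e(x)\|^{2}+\|e(y)\|^{2}\bigr)$ with unit constants. The tools you propose cannot produce it: the triangle inequality followed by a power-mean bound gives $\|g(x)-g(y)\|^{2}\le 3\bigl(\|f(x)-f(y)\|^{2}+\|e(x)\|^{2}+\|e(y)\|^{2}\bigr)$, i.e.\ a leading coefficient $\tfrac13$ (or $\tfrac12$ with a Young-type splitting), and boundedness of $f$ and $g$ in $[0,1]$ does not rescue the unit constant: with $g(x)=1$, $g(y)=0$, $f(x)=f(y)=\tfrac12$ the claimed pointwise bound reads $0\ge\tfrac12$. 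The paper does not argue through the triangle inequality at all; it regroups $f(x)-f(y)=\bigl[(f(x)-g(x))-(f(y)-g(y))\bigr]+\bigl[g(x)-g(y)\bigr]$ and passes in one asserted step to the displayed lower bound, in effect absorbing the cross term into the squared errors without further justification, so the unit constants you were trying to chase are not obtained there by an argument you could reconstruct. Concretely: if you accept a degraded leading constant (or a larger correction term), your proposal closes and yields an inequality of the same flavor; to get \eqref{eq:expectedDiff} exactly as stated you would need to supply a justification for the pointwise step that neither your sketch nor the quoted expansion provides, since only the summed (aggregate) version, not the pointwise one, has a chance of holding with those constants.
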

\begin{proof}
We have 
\begin{eqnarray*}
\|f(x) - f(y)\|^{2} &=& \|f(x) - g(x) - f(y) + g(y) + g(x) - g(y)\|^{2} \\
&\ge& \|g(x) - g(y)\|^{2} - \left(\|f(x)-g(x)\|^{2} + \|f(y)-g(y)\|^{2}\right).
\end{eqnarray*}
Unfortunately, because \eqref{eq:cost} is a global minimization, we cannot say anything meaningful about the difference for individual points.  However, we do have
\begin{eqnarray*}
\sum_{g(x)\neq g(y)}\|f(x) - f(y)\|^{2} &\ge& \sum_{g(x)\neq g(y)}\|g(x) - g(y)\|^{2} - \sum_{g(x)\neq g(y)}\left(\|f(x) - g(x)\|^{2} + \|f(y)-g(y)\|^{2}\right)\\
&=& \sum_{g(x)\neq g(y)}\|g(x) - g(y)\| - 2\sum_{x\in\Omega} \#\{y : g(x) \neq g(y)\} \cdot \|f(x) - g(x)\|^{2} \\
&\ge&  \sum_{g(x)\neq g(y)}\|g(x) - g(y)\| - 2\left(\max_{i\in\mathscr{C}}\#\{y : g(y)\neq i\}\right) \cdot nC,
\end{eqnarray*}
where $\#\{y : g(y)\neq i\}$ denotes the number of elements in this set.  Let $S = \#\{(x,y)\in\Omega\times\Omega : g(x) \neq g(y)\}$ and $S_{i} = \#\{y\in\Omega : g(y) \neq i\}$.  Then 
\begin{eqnarray*}
\E_{\neq} \left(\|f(x) - f(y)\|^{2}\right)&\ge& \E_{\neq} \left(\|g(x) - g(y)\|^{2}\right) - 2 \left(\frac{\max_{i\in\mathscr{C}}S_{i} \cdot n}{S}\right)C.
\end{eqnarray*}
This means that by minimizing $C$, we are forcing the separation of points with different initial ranking to be as large as possible.  This makes enforces Condition 2 of \eqref{eq:rho} in the aggregate over all such points.

The scaling of $\frac{1}{\prod_{i=2}^{L} \|W_i\|}$ for $\rho_{DNN}$ is a simple application of Lemma \ref{lemma:lipschitz}.
 \end{proof}
 
 \subsection{Heat Kernel Defined by $\rho_{DNN}$}\label{manyNets}
 
 The weights generated by the neural net represent ``meta-features'' formed from the features on $\Omega$.  Each hidden node generates important linear and non-linear combinations of the data, and contains much richer information than a single question or average over a few questions. 
 
One downside of traditional deep learning is that neural nets only account for the geometry of the observations.  They ignore the geometry and clustering of the data points, opting to treat every point equally by using a simple mean of all points in the cost function \eqref{eq:cost}. This can easily miss outliers and be heavily influenced by large clusters, especially with a small number of training points.
 
 The questionnaire from Section \ref{initialQuestionnaire} organizes both the observations and the data points, though the meta-features from the questionnaire are simply averages of similar features as in \eqref{eq:missingEntries}.  However, the expert ranking assigned to each bin reflects a rough initial geometry onto the points to be learned by the neural net.
  
 The back propagation of the expert driven function is essential to building significant weight vectors in this regime of small datasets.  When the ratio of the number of data points to the dimension of the features is relatively small, there are not enough training points to learn the connections between all the features without considering the points themselves.  The back propagation of the classification function generated from the questionnaire is a way to enforce the initial geometry of the data on the weight vectors.  The hospital ranking example in Section \ref{hospitals} demonstrates the need for back propagation, and Figure \ref{fig:saeNoBackprop} demonstrates the fact that the features from a simple SDAE are not sufficient for separating data points.
  
 For our algorithm, the SDAE is trained on $\Omega$, the subset of points with ``full'' collection of features.  This is to avoid training on reconstructed features which are subject to reconstruction error from Section \ref{missingFeatures}.  

Another problem with neural nets is that they can be highly unstable under parameter selection (or even random initialization).  Two identical iterations can lead to completely different weights set.  Along with that, back propagation can force points into isolated corners of the cube in $[0,1]^{k}$.

For this reason, we rerun the neural net $K$ times with varied random seeds, number of hidden layers, sparsity parameters, and dropout percentages.  After $K$ iterations, we build the new set of features on points as $\Omega^*(x) = [h^{(1)}_1(x), ..., h^{(1)}_{K}(x)]$.  This defines an adjacency matrix on $A$ with affinity defined between two points as 
\begin{eqnarray}\label{eq:DNNAffinity}
A(x,y) = e^{-\|\Omega^*(x) - \Omega^*(y)\|^{2}/\epsilon}.
\end{eqnarray}  
Along with that, the final ranking function on $M$ comes from $f(x) = \frac{1}{K}\sum_{i=1}^{K} f_{i}(x)$.  Note that $\|\Omega^*(x) - \Omega^*(y)\|^2 = \sum_{i=1}^K \rho_{DNN, i}(x,y)^2.$

The expert driven heat kernel defined in \eqref{eq:DNNAffinity} generates an embedding $\Phi:\Omega\rightarrow\R^d$ via the eigenvectors $\Phi^{t}(x) = [\lambda_1^t \phi_1(x), ... , \lambda_1^t \phi_d(x)]$.  
 
 For each neural net $h_{i}$, we keep the number of hidden layers small relative to the dimension of the data.  This keeps the net from overfitting the data to the initial organization function $g$.

\subsection{Standardizing Distances to Build an Intrinsic Embedding}\label{mahalanobis}

While this generates a global embedding based off local geometry, it does not necessarily generate a homogenous space.  In other words, $\|\Phi^t(x) - \Phi^t(y)\| = \|\Phi^t(x') - \Phi^t(y')\|$ does not necessarily guarantee that $x$ and $y$ differ by same amount as $x'$ and $y'$.  This is because $\Omega^*(x)$ and $\Omega^*(y)$ may differ in a large number of deep net features, whereas $\Omega^*(x')$ and $\Omega^*(y')$ may only differ in one or two features (though those features may be incredibly important for differentiation).  

For this reason, we must consider a local z-score of the regions of the data.  For each point $\Phi^t(x)$, there exists a mean and covariance matrix within a neighborhood $U_x$ about $x$ such that
\begin{eqnarray*}
\mu_x &=& \frac{1}{|U_x|} \sum_{z\in U_x} \Phi^t(z), \\ 
\Sigma_x &=& \frac{1}{|U_x|} \sum_{z\in U_x} \left( \Phi^t(z) - \mu_x \right)^\intercal \left( \Phi^t(z) - \mu_x \right).
\end{eqnarray*}
This generates a new whitened distance metric
\begin{eqnarray}
d_t(x,y) = \frac{1}{2} \left[(\Phi^t(x) - \mu_x) - (\Phi^t(y) - \mu_y)\right]^\intercal \left( \Sigma_x^{\dagger} + \Sigma_y^{\dagger} \right) \left[(\Phi^t(x) - \mu_x) - (\Phi^t(y) - \mu_y)\right],
\end{eqnarray}
where $\Sigma^{\dagger}$ is the Penrose-Moore pseudoinverse of the covariance matrix.
 
 One can generate a final, locally standardized representation of the data via the diffusion kernel
 \begin{eqnarray*}
 W(x,y) = e^{-d_t(x,y) / \sigma},
 \end{eqnarray*}
 and the low frequency eigenvalues/eigenvectors of $W$, which we call $\{(s_i,\psi_i)\}$.  The final representation is denoted 
 \begin{eqnarray}\label{eq:standardizedEmbedding}
 \Phi_{std}^t(x) = [s_i^t \psi_1(x), ... ,s_i^t \psi_d(x)].
 \end{eqnarray}

 \subsection{Extension to New Points}\label{refSetEmbedding}
 
 Once meta-features have been built, extending the embedding to the rest of $M$ can be done via an asymmetric affinity kernel, as described in \cite{refsethaddad}.  For each $x\not\in \Omega$, one can calculate $\Omega^*(x) = [h^{(1)}_{i}(x)]_{i=1}^K$ and $f_{i}(x)$ with the weights generated in Section \ref{manyNets}. This defines an affinity between $x$ and $y\in\Omega$ via 
 \begin{eqnarray*}
a(x,y) = e^{-\|\Omega^*(x) - \Omega^*(y)\|^{2}/\epsilon}.
 \end{eqnarray*}
 The affinity matrix is $a\in\R^{N\times n}$.  \cite{refsethaddad} then shows that the extension of the eigenvectors $\phi_i$ of $A$ from \eqref{eq:DNNAffinity} to the rest of $M$ is given by 
 \begin{eqnarray*}
 \widetilde{\phi}_i = \frac{1}{\lambda_i^{1/2}} \widetilde{A} \phi_i,
 \end{eqnarray*}
 where $\widetilde{A}$ is the row stochastic normalization of $a$.

 Similarly, once the embedding $\Phi$ is extended to all points in $M$, we can extend the final, standardized representation to the rest of $M$ as well using the affinity matrix
 \begin{eqnarray*}
 b(x,y) = exp\{ -\frac{1}{2} \left[(\Phi^t(x) - \mu_y) - (\Phi^t(y) - \mu_y)\right]^\intercal  \Sigma_y^{\dagger} \left[(\Phi^t(x) - \mu_y) - (\Phi^t(y) - \mu_y)\right] /\sigma \},
 \end{eqnarray*}
 for $x\in M$ and $y\in\Omega$.  Once again, the eigenvectors $\psi_i$ in \eqref{eq:standardizedEmbedding} can be extended similarly via 
 \begin{eqnarray*}
 \widetilde{\psi}_i = \frac{1}{s_i^{1/2}} \widetilde{B} \psi_i,
 \end{eqnarray*}
 where $\widetilde{B}$ is the row stochastic normalized version of $b$. 

 In this way, $\Phi_{std}^t$ can be defined on all points $M$ after analysis on the reference set $\Omega$.

 \subsection{Different Approach to Deep Learning}\label{diffTypeDNN}
Our algorithm, as we will discuss in detail in Section \ref{DNN}, uses the meta-features from stacked neural nets in a way not commonly considered in literature.  Most algorithms use back propagation of a function to fine-tune weights matrices and improve the accuracy of the one dimensional classification function.  However, in our algorithm, the purpose of the back propagation is not to improve classification accuracy, but instead to organize the data in such a way that is smooth relative to the classification function.  In fact, we are most interested in the level sets of the classification function and understanding the organization of these level sets.

At the same time, we are not building an auto-encoder that pools redundant and correlated features in an attempt to build an accurate, lossy compression (or expansion) of the data.   Due to the high level of disagreement among the features, non-trivial features generated from an auto-encoder are effectively noise, as we see in Section \ref{DNNStars}.  This is the motivation behind propagating an external notion of ``quality''.

 \section{Expert Driven Embeddings for Hospital Rankings}\label{hospitals}
 \subsection{Hospital Quality Ranking}
 
 For preprocessing, every feature is mean centered and standard deviation normalized.  Also, some of the features are posed in such a way that low scores are good, and others posed such that low scores are bad.  For this reason, we ``de-polarize'' the features using principal component analysis.  Let $M=[v_{1}, ... , v_{N}]$ be the data matrix.  Then let $U$ be the largest eigenvector of the covariance matrix
\begin{displaymath}
\left[
     \begin{array}{c}
       M \\
       -M
     \end{array}
   \right] \times
   \left[
     \begin{array}{cc}
       M^{*} &
       -M^{*}
     \end{array}
   \right],
   \end{displaymath} 
and take the half of the features $i$ for which $U_{i} > 0$.  This makes scores above the mean ``good'' regardless of whether the feature is posed as a positive or negative question.

We begin by building a questionnaire on the hospitals.  
Our analysis focuses on the 2014 CMS measures.
We put a $2\times$ weight on mortality features and $1.5\times$ weight on readmission features due to importance, because the outcome measures describe the tangible results of a hospitalization are particularly important for patients.  The questionnaire learns the relationships between the hospitals, as well as the relationships between the different features.  In doing this, we are able to build a partition tree of the hospitals, in which hospitals in the same node of the tree are more similar than hospitals in different nodes on the same level of the tree.  As a side note, the weighting on mortality features only guarantees that the intra-bin variance of the mortality features is fairly low.

For the ranking in this example, we use the $5^{th}$ layer of a dyadic questionnaire tree, which gives $32$ bins and pseudohospitals.  Experts on CMS quality measures rank these pseudohospitals on multiple criteria and assign a quality score between $1$ to $10$ to each pseudohospitals.  We use an average of $50$ nodes per layer of the neural net.  Also, we average the results of the neural nets over $100$ trials.  We shall refer to the final averaged ranking as the deep neural net ranking (DNN ranking) to avoid confusion.  

 \subsection{Two Step Embedding of Hospitals}\label{DNNStars}
%
  
 Figure \ref{fig:subsetEmbedding} shows the embedding of the subset of 
hospitals used in training the neural net.  It is colored by the quality function assigned to those hospitals.  The various prongs of the embedding are explained in Section \ref{prongsOfEmbedding}.  Figure \ref{fig:fullEmbedding} shows the embedding of the full set of hospitals.  This was generated via the algorithm in Section \ref{refSetEmbedding}.

  \begin{figure}[h!]
 \begin{center}
 \includegraphics[width=.5\textwidth]{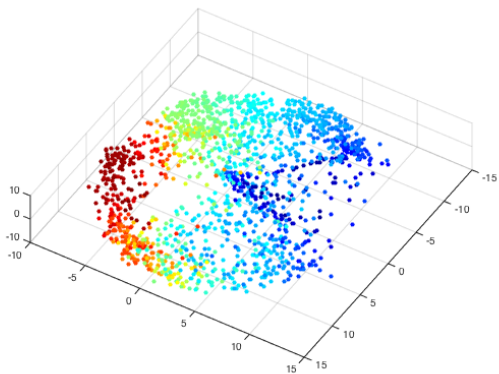}
 \end{center}
 \caption{Embedding of hospitals with at most $\tau$ missing entries.  Red corresponds to top quality, blue to bottom quality.}\label{fig:subsetEmbedding}
 \end{figure}

 \begin{figure}[h!]
 \begin{center}
 \includegraphics[width=.5\textwidth]{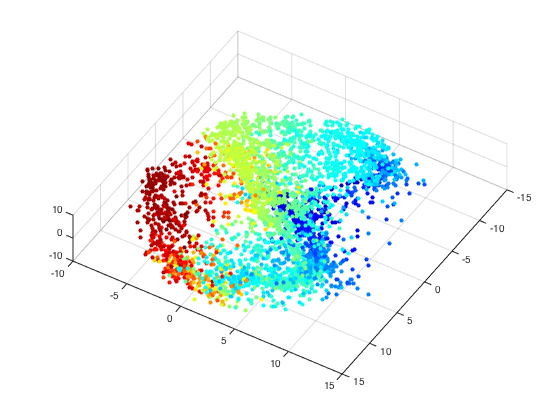}
 \end{center}
 \caption{Embedding of full set of hospitals.  Red corresponds to top quality, blue to bottom quality.}\label{fig:fullEmbedding}
 \end{figure}
 
 Finally, while the goal of the hospital organization is to determine neighborhoods of similar hospitals, it is also necessary for all hospitals in a shared neighborhood to share a common quality rating.  Figure \ref{fig:starNeighbors} plots the quality function assigned to the hospitals against the weighted average quality function of its neighborhood, where the weights come from the normalized affinities between the given hospitals and its neighbors.  The strong collinearity demonstrates that the assigned quality function is consistent within neighborhoods of similar hospitals.
 
  \begin{figure}[h!]
 \begin{center}
 \includegraphics[width=.5\textwidth]{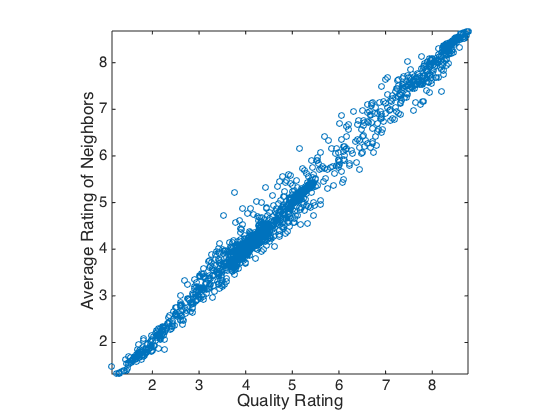}
 \end{center}
 \caption{Quality function assigned  for hospital versus average quality across the neighborhood in Figure \ref{fig:subsetEmbedding}.}\label{fig:starNeighbors}
 \end{figure}
 
 To demonstrate that the expert input back propagation is necessary for a viable ranking and affinity, we include Figure \ref{fig:saeNoBackprop}.  Here, we build the same diffusion map embedding, but on the features of the autoencoder before back propagation of the expert input function.  Due to the small number of data points relative to the number of features, an untuned autoencoder fails to form relevant meta-features for the hospitals.
 
 \begin{figure}[h!]
 \begin{center}
 \includegraphics[width=.5\textwidth]{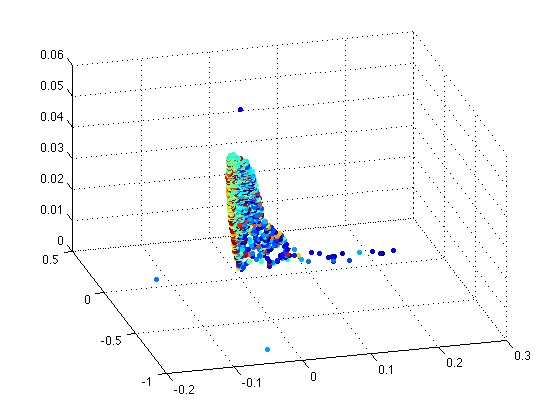}
 \end{center}
 \caption{Embedding of $\Omega$ without back propagation of expert driven hospitals rankings.  Coloring comes from quality rating function.  Notice that without back propagation the embedding is effectively noise compared to hospital quality.}\label{fig:saeNoBackprop}
 \end{figure}

 \subsection{Internal Validation}\label{interalValidation}
 The problem with validating this composite rating of hospital quality is the lack of external ground truth.  For this reason, we must rely on internal validation mechanisms to demonstrate consistency.  There are two qualities necessary for a ranking mechanism: the affinity between hospitals cannot disagree too drastically with the original features, and similar hospitals must be of similar quality.

 For the rest of the section, we refer to several simple meta-features for comparison and validation.  Row sum ranking refers to a simple ranking function 
 \begin{eqnarray*}
 f_{row-sum}(x) = \sum_{i} x_{i},
 \end{eqnarray*}
 where $\{x_{i}\}$ are the normalized and depolarized features for a given hospital $x$.  Average process, survey, and outcome features are the same as the row sum score, but restricted only to features in the given category.  NNLS weighted ranking refers to a function
 \begin{eqnarray*}
  f_{NNLS}(x) = \sum_{i} w_{i} x_{i},
 \end{eqnarray*}
 where the weights are calculated using non-negative least squares across the normalized and depolarized features, with the DNN ranking as the dependent variable.  NNLS weighted process, survey, and outcome features are the same as the NNLS weighted score, but restricted only to features in the given category.   NNLS was used for calculating the weights to avoid overfitting the DNN ranking by using negative weights.
 
 Also, our focus (unless otherwise stated) is on the subset of hospitals $\Omega$ that have more than $90\%$ features reported.  This is because a number of our validation steps use the row sum ranking and NNLS weighted ranking, which are more accurate when calculated over non-missing entries.  
  
 \subsubsection{DNN Satisfies Conditions for $\rho$}
 
Figure \ref{fig:euclideanDistanceDeviation} verifies that the back propagation neural net satisfies Condition 1 of $\rho$ from \eqref{eq:rho}.  The plot shows the ranking of each hospital plotted against the average of its ten nearest neighbors under Euclidean distance between hospital profiles.  The fact that the average strongly correlates with the original quality rating shows that the embeddings of the hospitals remain close if they are close under Euclidean distance.

  \begin{figure}[h!]
 \begin{center}
 \includegraphics[width=.5\textwidth]{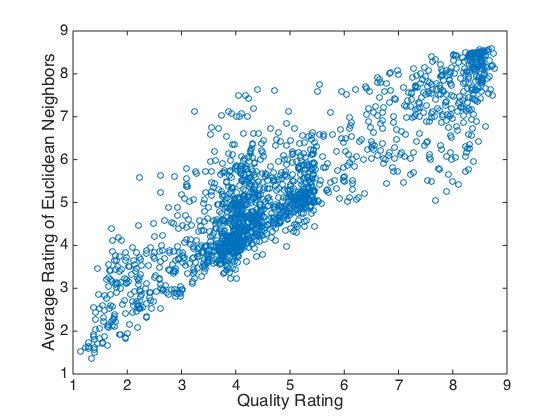}
 \end{center}
 \caption{Quality function for hospital versus average quality function across closest Euclidean points.  Demonstrates first condition necessary for $\rho$ from \eqref{eq:rho}.}\label{fig:euclideanDistanceDeviation}
 \end{figure}

Figure \ref{fig:histAffinity} shows that the back propagation neural net satisfies Condition 2 of $\rho$ from \eqref{eq:rho}.  The histogram on the left shows the DNN affinity between hospitals with different initial quality ratings (i.e. when $g(x) \neq g(y)$).  To satisfy Condition 2, $g(x) \neq g(y) \implies \rho(x,y)>\epsilon_{class}$ (i.e. $A(x,y)<1-\epsilon$).  Also, for the histogram on the right we define 
\begin{eqnarray*}
P_{\neq}(t) = \Pr \left(A(x,y)>t : g(x)\neq g(y)\right), & P_{=}(t) = \Pr\left(A(x,y)>t : g(x) = g(y)\right).
\end{eqnarray*}
The histogram on the right shows $\frac{P_{\neq}(t)}{P_{=}(t)}$.  The smaller the ratio as $t$ approaches $1$, the stronger the influence of the initial ranking on the final affinity.
 
 \begin{figure}[h!]
 \begin{tabular}{cc}
 \includegraphics[width=.5\textwidth]{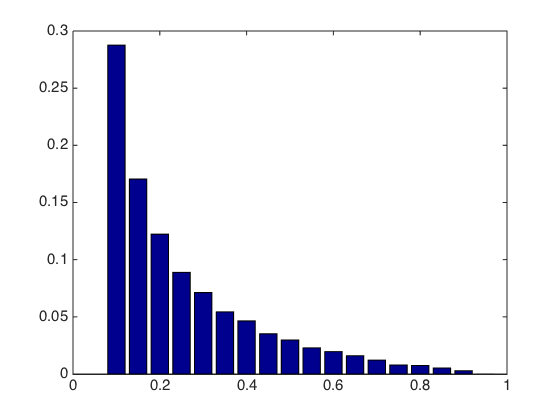} & 
  \includegraphics[width=.5\textwidth]{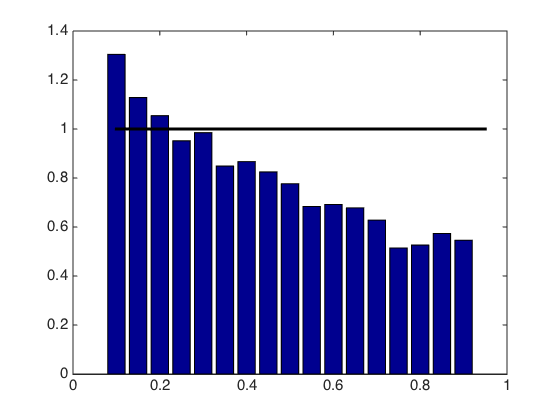}
 \end{tabular}
 \caption{(left) Histogram of DNN affinity between two points with different initial quality ratings.  
 (right) Normalized histogram of DNN affinity between two points with different initial quality ratings divided by affinity between two points with same initial quality rating.  If initial ranking was unimportant, bar graph would be concentrated around 1.  Both images demonstrate second condition necessary for $\rho$ from \eqref{eq:rho}.  Note that, for scaling purposes, all values less than $.1$ have been removed from counting.}\label{fig:histAffinity}
 \end{figure}
 
 Moreover, we can compare the contraction guaranteed by \eqref{eq:expectedDiff}.  For the hospital ratings,
 \begin{align*}
 \E_{\neq}\left(\|f(x) - f(y)\|^{2}\right) = 2.44, &\text{ } \E_{\neq}\left(\|g(x) - g(y)\|^{2}\right) = 3.36,  & \frac{\max_{i\in\mathscr{C}}S_{i} \cdot n}{S} = 1.22, & \text{ }C = 0.9959,
 \end{align*}
 which makes the right hand side of  \eqref{eq:expectedDiff} equal $2.15$.


 
 Figure \ref{fig:cumsumAffinity} indicates that the DNN metric from \eqref{eq:rho} gives a better notion of small neighborhoods than a simple Euclidean metric.  Each metric defines a transition probability $P(x,y)$.  For each point $x$, the plot finds 
 \begin{equation}\label{eq:smallNeighborhood}
\begin{aligned}
& \underset{I\subset\Omega}{\text{minimize}}
& & \# I \\
& \text{subject to}
& & \sum_{y\in I}P(x,y) \ge \frac{1}{2}.
\end{aligned}
\end{equation}
It is important for \eqref{eq:smallNeighborhood} to be small, as that implies the metric generates tightly clustered neighborhoods.  As one can see from Figure \ref{fig:cumsumAffinity}, the DNN metric creates much more tightly clustered neighborhoods than a Euclidean metric.  Figure \ref{fig:cumSumAffinityVarySigma} gives summary statistics of these neighborhoods for varying diffusion scales.

  \begin{figure}[h!]
 \begin{center}
 \includegraphics[width=.5\textwidth]{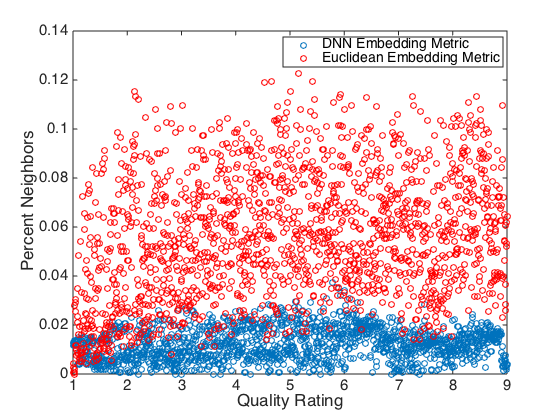} 
 \end{center}
 \caption{Percent of points necessary to sum half the total transition probability.  Metrics are generated in same fashion: $K(x,y) = e^{-\|x - y\|^2/\sigma^2}$.  Here $\sigma = \frac{1}{N}\sum_x \|x - y_x\|$, where $y_x$ is the $10^{th}$ closest neighbor of $x$.}\label{fig:cumsumAffinity}
 \end{figure}
 
   \begin{figure}[h!]
 \begin{tabular}{cc}
 \includegraphics[width=.5\textwidth]{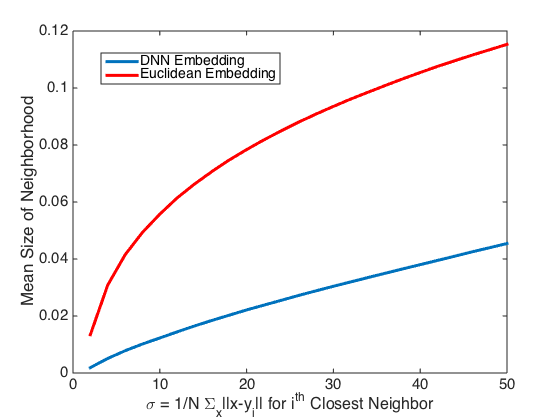} &
 \includegraphics[width=.5\textwidth]{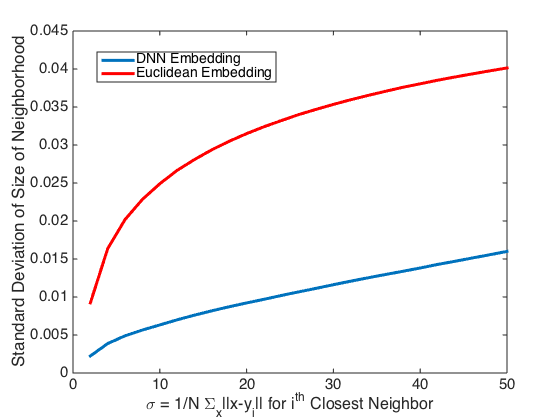}
 \end{tabular}
 \caption{Metrics are generated in same fashion: $K(x,y) = e^{-\|x - y\|^2/\sigma_i^2}$.  Here $\sigma_i = \frac{1}{N}\sum_x \|x - y_x\|$, where $y_x$ is the $i^{th}$ closest neighbor of $x$.  For both DNN embedding and Euclidean embedding metrics, the plots are (left) the average percentage of neighbors needed to contain half the transition probability from a given point to its neighborhood, and (right) the standard deviation of the number of neighbors needed to contain half the transition probability from a given point to its neighborhood.}\label{fig:cumSumAffinityVarySigma}
 \end{figure}

 Another positive characteristic of our DNN embedding is the reduced local dimension of the data.  Figure \ref{fig:eigenvals} plots the eigenvalues $\{S_{i}^{t}\}_{i=0}^{n}$ of the diffusion kernels $A_{t}$, where the Markov chain $A_{t}$ describes either comes from the Euclidean metric or the DNN metric.  These eigenvalues give us information about the intrinsic dimension of the data, as small eigenvalues do not contribute to the overall diffusion.  To compare the eigenvalues across different diffusion kernels, we normalize the eigenvalues by setting $t = \frac{1}{1-S_{1}^{1}}$, as this is the average time it takes to diffuse across the system.  Cutting off the eigenvalues to determine the dimension is fairly arbitrary without a distinct drop off, but an accepted heuristic is to set the cutoff at 
 \begin{eqnarray*}
 \text{dim}(A_{t}) = \max \{d \in \N : S_{d} > 0.01\}.
 \end{eqnarray*}
 With this definition of dimension, dim$(A_{t}) = 7$ for the DNN metric embedding, whereas dim$(A_{t}) = 14$ for the Euclidean metric embedding.
 
 \begin{figure}[h!]
 \begin{center}
 \includegraphics[width=.5\textwidth]{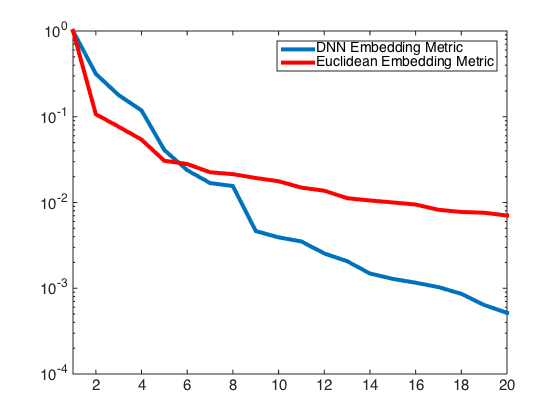}
 \end{center}
 \caption{Eigenvalues $\{S_{i}^{t}\}_{i=0}^{n}$ of the embedding of the heat kernel $A_{t}$ generated from the Euclidean metric (blue) and DNN metric (red).  Each eigenvalue set is normalized by setting $t = \frac{1}{1-S_{1}^{1}}$. }\label{fig:eigenvals}
 \end{figure}

\subsubsection{Examples of ``Best'' and ``Worst'' Hospitals} 
 
 Row sum ranking is a poor metric for differentiation between most hospitals, but it is able to distinguish the extremal hospitals.  In other words, hospitals with almost all features considerably above the mean would have obviously been assigned a 10 within any ranking system, and hospitals with almost all features below the mean would have obviously been assigned a 1 by any ranking system. 
 
 Let us take the $25$ best and worst hospitals under the row sum ranking, and consider how well these correspond to the final DNN rankings from Section \ref{DNNStars}.  The average quality function value across the top $25$ row sum rankings is 8.03, and the average quality function value across the bottom $25$ row sum rankings is 1.75.

 While these averages are indicative of the agreement between the DNN ranking and simple understandings of hospital quality, they are not perfect.  For example, the top $25$ row sum hospitals have an average DNN ranking of 8.03 because one of these hospitals has a DNN ranking of 4.84.  This hospital 
 is an interesting profile.  Qualitatively, its process features and survey features are above average, but their outcomes are significantly worse than the mean.  Even though only 16 of its 81 features are below the mean, the features for mortality from heart failure and pneumonia, and several measures of hospital associated infection, are all more than 1 standard deviation below the mean.   As all five of those features are heavily weighted by the algorithm and deemed crucial by experts, the hospital is assigned a low ranking despite having a strong row sum ranking.
  

 Figure \ref{fig:simpleMetricEmbedding} shows the embeddings of hospitals without the neural net step.  In other words, we simply weight the mortality and readmission features and calculate an affinity based off those feature vectors.  There are two important details.  First, the embedding is very unstructured, implying a lack of structure within the naive affinity matrix.  Second, while the cloud doesn't necessarily reflect structure, it does give a simple understanding of local neighborhoods.  
 
 Figure \ref{fig:simpleMetricEmbedding} also shows that our new notion of rankings reflects a naive understanding of hospital quality.  Namely, local Euclidean neighborhoods maintain the same ranking.  Also, the 10 best and worst hospitals under the row sum ranking are circled.  As expected, they exist on the two extremes of the embedding cloud, and our rankings agree with their notion of quality.
  
 \begin{figure}[h!]
 \begin{tabular}{cc}
 \includegraphics[width=.45\textwidth]{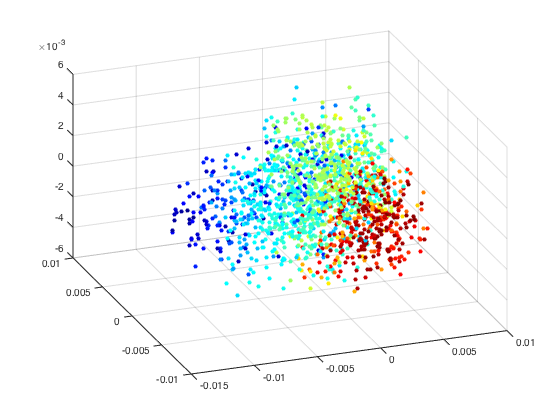} & 
  \includegraphics[width=.45\textwidth]{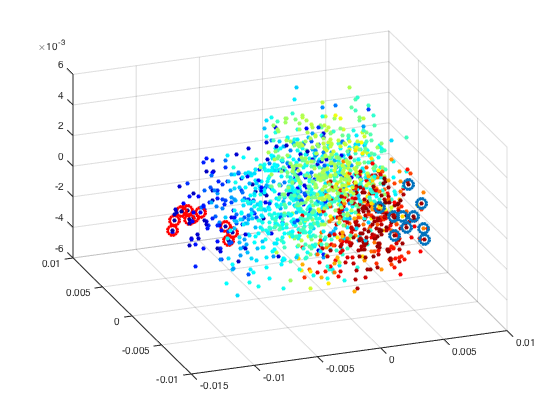}
 \end{tabular}
 \caption{Left: Embedding of hospitals using Euclidean distance on profiles, with $2\times$ weighting on morality features and $1.5\times$ weighting on readmission features.  Coloring is value of the quality function described in Section \ref{DNNStars}, with top ratings being red and bottom ratings being blue.  Right: Same embedding, with extremal points circled.  Blue circles on $10$ hospitals with largest number of features above the mean, and red circles on $10$ hospitals with fewest number of features above the mean.  This is a rough characterization of ``best'' and ``worst'' hospitals.}\label{fig:simpleMetricEmbedding}
 \end{figure}

 \subsubsection{Dependence on Initial Rankings}\label{initialRankings}
 
Another important feature of a ranking algorithm is its robustness across multiple runs.  To demonstrate stability of the neural net section of the algorithm, we run multiple experiments with random parameters to test the stability of the quality function.  Figure \ref{fig:starsAcrossRuns} shows the quality rating from one run of the algorithm (ie. the average ranking after 100 iterations of the neural net) against the quality rating from a separate run of the algorithm.  In both runs, $1,000$ hospitals from $\Omega$ are randomly chosen for training the model and then tested on the other $614$ hospitals in $\Omega$.  Figure \ref{fig:starsAcrossRuns} shows the rankings for all $1,614$ hospitals across both runs. Given the strong similarity in both quality rating and overall organization, it is clear that the average over 100 iterations of a neural net is sufficient to decide the quality function.
 
 \begin{figure}
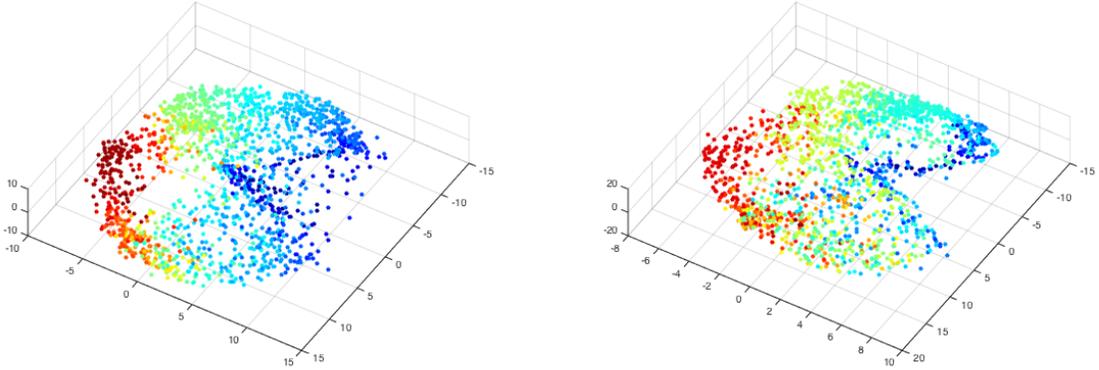

 \begin{tabular}{cc}
 \includegraphics[width=.5\textwidth]{diffEmbedding_smoothedStars_new.png} & 
  \includegraphics[width=.5\textwidth]{mahalanobisEmbedding_largeLayers_rotated.png}
 \end{tabular}
 \caption{(left) Original embedding colored by quality function.  Same image as Figure \ref{fig:subsetEmbedding}.  (right) New embedding of 100 nets trained independently from first run.  Embedding is finally rotated to match orientation of the original via \cite{Hirn}.  Quality rating coloring the second embedding is generated independently of left embedding.}\label{fig:starsAcrossRuns}
 \end{figure}
 
 It is also important to examine the dependence our initial binning and ranking has on the final ranking of the hospitals.  Clearly the initial binning is only meant to give approximate ranks, so a strong dependence on these rankings would be problematic.  Table \ref{tab:confusionRankings} shows the confusion matrix between the initial hospital rankings and the final rankings assigned from the neural net.  The purpose of the neural net second step is to reclassify hospitals that are binned incorrectly due to spurious correlations.

      \begin{table}
      \begin{center}
 \begin{tabular}{|c|cccc|}
 \hline
 &  {\bf Final} & {\bf Rank} & & \\
 \hline
 
  &    8  &  11 &    0  &   0\\
  {\bf First Rank} & 15 &  730  & 129  &   0\\
   &  0  &  97 &  330  & 219\\
   &  0  &   0   & 27 &   48\\
     \hline
     \end{tabular}
     \caption{Confusion matrix between initial ranking on hospital bins and final ranking from neural net.  For simplicity, rankings have been rounded into quartiles for purposes of confusion matrix.}\label{tab:confusionRankings}
     \end{center}
     \end{table}

 Let us examine a few of the hospitals that made the biggest jump.  One of the hospitals increased from a rating of 3 to a DNN rating of 6.55.  
 Several of its process features are well below average with a couple 2 standard deviation below average, and its survey features are average to slightly below average.  For this reason, it was immediately classified as a poor hospital.  However, its readmission features across the board are above average, with readmission due to heart failure 1.5 standard deviations above average.  Also, mortality due to heart failure is a standard deviation above average, and all features of hospital associated infection are above the mean as well.  In the initial ranking of the pseudohospitals, the fact that there are 29 survey features as compared to 5 readmission features brought the weighting down, even when giving extra weight to the readmission features.  After the neural net ranking, the ranking is improved to 6.55.


   Another hospital dropped from a rating of 7 to a DNN rating of 3.4.  
 Its process feature are all above average, and its readmission for heart failure is 1.5 standard deviations above the mean.  Because readmission is weighted initially and there are a large number of process measures, the hospital was initially binned with better hospitals.  However, its features for hospital associated infections, as well as mortality due to pneumonia and heart failure are a standard deviation below the mean, and its average survey score is a standard deviation below the mean as well.  For these reasons, the DNN ranking dropped significantly.


 \subsubsection{Affinity Dependence on Individual Features}\label{prongsOfEmbedding}

As another validation mechanism, we consider the smoothness of each feature against the metric $\rho$.  For each feature $f_{i}$, we estimate the Lipschitz constant $L_{i}$ such that 
\begin{eqnarray*}
|f_{i}(x) - f_{i}(y)| \le L_{i} \|x - y\|_{DNN},
\end{eqnarray*}
where $\|\cdot\|_{DNN}$ is simply the Euclidean distance in the diffusion embedding space of the DNN affinity matrix $A$.  Once $f$ is normalized by range$(f)$, $L_{i}$ gives a measure of the smoothness of $f_{i}$ under our new metric.

%

\newpage
\centering

Lipschitz Constant Under $\rho$

\begin{minipage}[t]{0.4\textwidth}
\footnotesize
\begin{tabular}{|c|c|}
\hline
{\bf Feature} & {\bf Lipschitz Constant} \\
\hline
Quality Rating Function & 0.124\\
READM 30 HOSP WIDE & 0.303\\
MORT 30 HF & 0.346\\
READM 30 HF & 0.350\\
MORT 30 AMI & 0.371\\
MORT 30 PN & 0.384\\
READM 30 PN & 0.398\\
READM 30 AMI & 0.430\\
READM 30 HIP KNEE & 0.440\\
H HSP RATING 0 6 & 0.582\\
H RECMND DN & 0.595\\
H COMP 1 SN & 0.618\\
H HSP RATING 9 10 & 0.620\\
H COMP 3 SN & 0.688\\
H RECMND DY & 0.691\\
H COMP 1 A & 0.712\\
H QUIET HSP SN & 0.735\\
VTE 5 & 0.740\\
PSI 90 SAFETY & 0.758\\
H COMP 2 SN & 0.758\\
H COMP 4 SN & 0.761\\
COMP HIP KNEE & 0.762\\
STK 8 & 0.763\\
ED 1b & 0.766\\
OP 20 & 0.789\\
ED 2b & 0.795\\
PSI 4 SURG COMP & 0.807\\
VTE 3 & 0.807\\
H RECMND PY & 0.811\\
H HSP RATING 7 8 & 0.831\\
OP 21 & 0.832\\
STK 1 & 0.834\\
HAI 5 & 0.838\\
H CLEAN HSP SN & 0.839\\
H COMP 5 SN & 0.844\\
HAI 2 & 0.846\\
HAI 3 & 0.868\\
H COMP 6 Y & 0.870\\
H COMP 6 N & 0.870\\
HAI 1 & 0.875\\
STK 6 & 0.889\\
H COMP 4 A & 0.898\\
PC 01 & 0.904\\
VTE 2 & 0.906\\
\hline
\end{tabular}
\end{minipage}\hspace{\fill}\begin{minipage}[t]{0.4\textwidth}
\footnotesize
\begin{tabular}{|c|c|}
\hline
{\bf Feature} & {\bf Lipschitz Constant} \\
\hline
H COMP 1 U & 0.908\\
OP 11 & 0.913\\
OP 9 & 0.913\\
H COMP 3 A & 0.918\\
H COMP 2 A & 0.931\\
OP 13 & 0.934\\
OP 18b & 0.935\\
STK 10 & 0.942\\
H QUIET HSP A & 0.946\\
OP 10 & 0.978\\
H COMP 5 A & 0.996\\
H CLEAN HSP A & 1.023\\
HF 3 & 1.077\\
AMI 2 & 1.081\\
HAI 6 & 1.088\\
SCIP INF 9 & 1.102\\
VTE 4 & 1.106\\
AMI 10 & 1.133\\
OP 6 & 1.147\\
H COMP 2 U & 1.186\\
SCIP CAR & 1.192\\
VTE 1 & 1.221\\
H CLEAN HSP U & 1.225\\
OP 22 & 1.227\\
H QUIET HSP U & 1.258\\
SPP & 1.264\\
IMM 1A & 1.272\\
IMM 2 & 1.284\\
STK 2 & 1.288\\
OP 7 & 1.291\\
H COMP 4 U & 1.325\\
H COMP 3 U & 1.365\\
HF 1 & 1.430\\
STK 5 & 1.443\\
PN 3b & 1.545\\
H COMP 5 U & 1.566\\
SCIP INF 2 & 1.994\\
SCIP INF 3 & 2.001\\
PN 6 & 2.066\\
SCIP INF 1 & 2.131\\
SCIP VTE & 2.151\\
SCIP INF 10 & 3.333\\
HF 2 & 5.111\\
& \\
\hline

\end{tabular}
\end{minipage}

\flushleft
From the Lipschitz constant, we can conclude that the most influential features are the mortality and readmission scores, followed closely by survey scores and post surgical infection scores.  While the process features have some smoothness, the Lipschitz constants are significantly larger than outcome and some survey scores.

\begin{figure}[h!]
\begin{center}
\includegraphics[width=.45\textwidth]{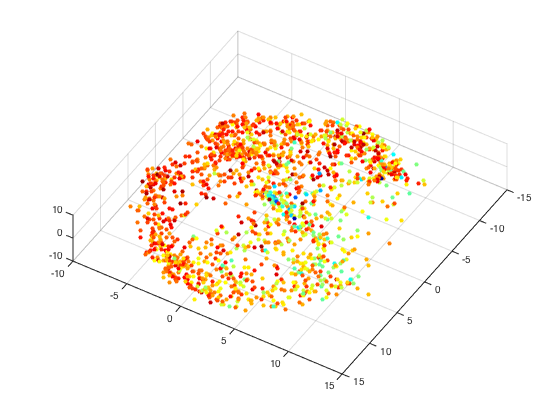}
\end{center}
\caption{Hospital Diffusion embedding colored by average across 34 process features.}\label{fig:starVsProcess}
\end{figure}

Almost all the hospitals in Figure \ref{fig:starVsProcess} have an average process score within a half standard deviation of the mean, which is well within acceptable levels.  For this reason, these features are not as strong of features as some of the other features.  However, the few hospitals that are well below the mean have middle to poor quality ratings. 

Figures \ref{fig:starVsSurvey} and \ref{fig:starVsOutcome} show the embedding of the hospitals colored by their average survey and average outcome features.  These are more strongly correlated with the overall quality function.
\begin{figure}[h!]
\begin{center}
\includegraphics[width=.45\textwidth]{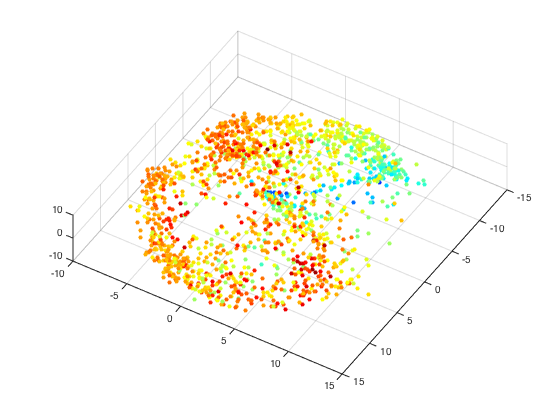}
\end{center}
\caption{Hospital Diffusion embedding colored by average across 29 survey features.}\label{fig:starVsSurvey}
\end{figure}

\begin{figure}[h!]
\begin{center}
\includegraphics[width=.45\textwidth]{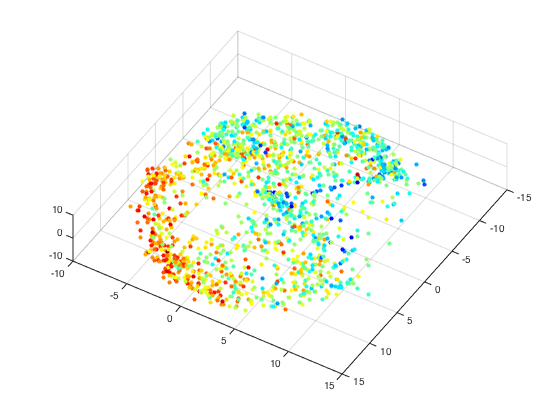}
\end{center}
\caption{Hospital Diffusion embedding colored by average across 15 outcome features.}\label{fig:starVsOutcome}
\end{figure}

Note that the average across all features from a category is not the ideal description of that category.  For a given hospital, there can be significant variability within a category which would not be captured by the mean.  The category average is only meant to demonstrate a general trend.  Figure \ref{fig:starVsMortReadLS} shows a weighted average of the readmission and mortality features, with the weights determined by a least squares fit with the DNN quality function as a dependent variable.

\begin{figure}[h!]
\begin{tabular}{cc}
\includegraphics[width=.45\textwidth]{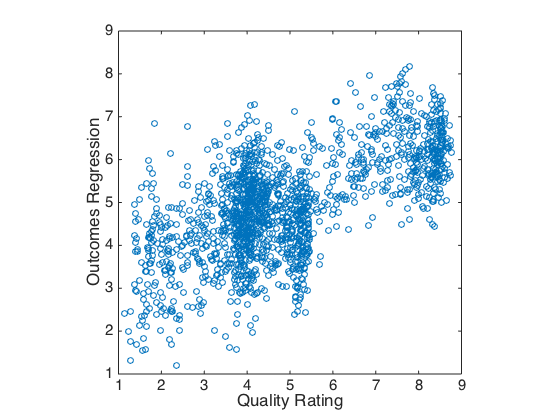} & 
\includegraphics[width=.45\textwidth]{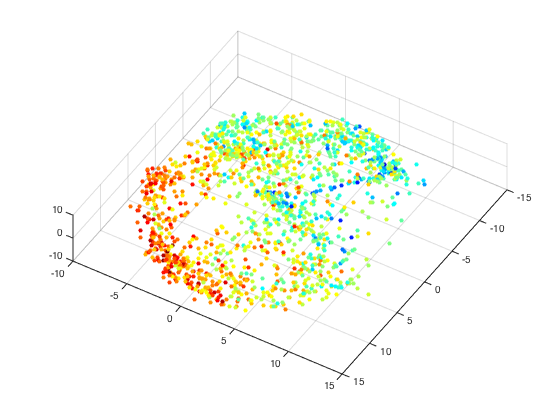}
\end{tabular}
\caption{(left) Scatter plots of quality function vs least squares solution with mortality and readmission as independent variables and quality function as dependent variable.  (right) Hospital Diffusion embedding colored by same least squares solution.}\label{fig:starVsMortReadLS}
\end{figure}

We can also use these embeddings, along with Figure \ref{fig:readMortEmbedding}, to show why the embedding has the two prongs on the left side of the image among the well ranked hospitals.  Notice from Figure \ref{fig:readMortEmbedding} that readmission is very strong along the lower prong, while mortality scores are very strong along the upper prong.  Moreover, there is some level of disjointness between the two scores \cite{harlanReadMort}.
\begin{figure}[h!]
\begin{tabular}{cc}
\includegraphics[width=.45\textwidth]{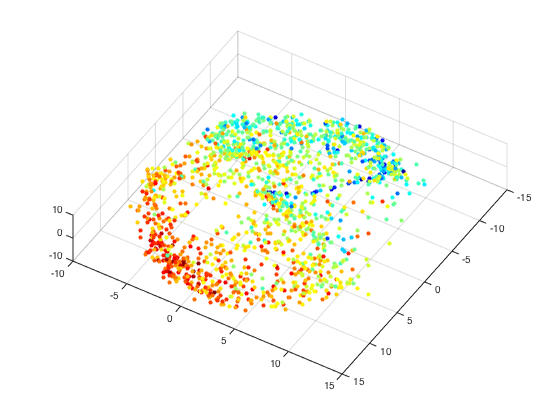} & 
\includegraphics[width=.45\textwidth]{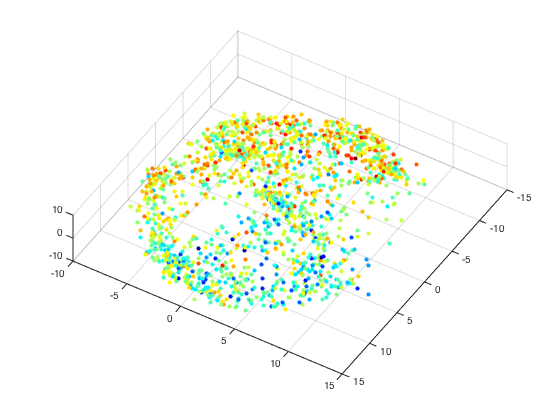}
\end{tabular}
\caption{(left) Hospital Diffusion embedding colored by hospital wide readmission  (right) Hospital Diffusion embedding colored by mortality from heart failure.}\label{fig:readMortEmbedding}
\end{figure}

This begs the question of why the strong readmission cluster is ranked higher than the strong mortality cluster.  That can be explained in Figure \ref{fig:starVsSurvey}.  The survey features are much stronger along the lower cluster with high readmission than they are in the mortality cluster.  This explains the cluster of hospitals receiving top quality ratings despite having average to slightly above average mortality scores.  The same can be said, to a lesser degree, for the process scores in Figure \ref{fig:starVsProcess}.  So the differentiation in rankings is derived from the fact that, at their highest levels, more of the features agree with readmission features than they do with mortality features.

The left plot in Figure \ref{fig:meanScores} represents each hospital with three features: the mean score across process features, the mean score across survey features, and the mean score across outcome features.  The plot is colored by the quality function, and it is clear that the rankings reflect the trend of these three average features.  The right plot in Figure \ref{fig:meanScores} represents each hospital with the NNLS process, survey, and outcome scores.  Table \ref{tab:strongNNLSMetrics} list those features that have non-trivial weights with p-value $<.05$.

\begin{figure}[h!]
\begin{tabular}{cc}
\includegraphics[width=.45\textwidth]{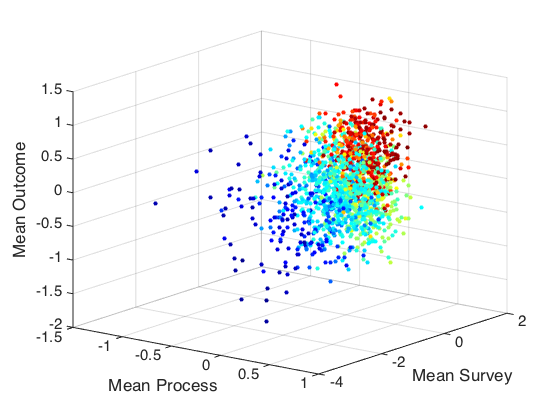} & 
\includegraphics[width=.45\textwidth]{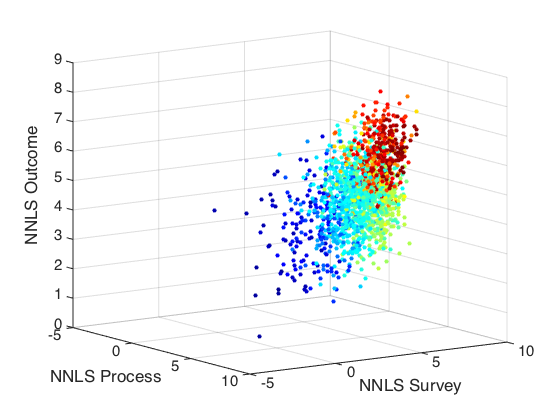}
\end{tabular}
\caption{(left) Plot of mean scores across process, survey, and outcome measures, colored by the quality function.  (right) Plot of process, survey, and outcome measures with weights determined by non negative least squares with the quality function as dependent variable, colored by quality rating.}\label{fig:meanScores}
\end{figure}

\begin{table}[ht!]
\footnotesize
\begin{centering}
\begin{tabular}{|c|c|}
\hline
{\bf Features} & {\bf Coefficient Magnitude}\\
\hline
HAI 3 & 0.029\\
HAI 6 & 0.010\\
READM 30 AMI & 0.148\\
READM 30 HF & 0.139\\
READM 30 HIP KNEE & 0.148\\
READM 30 HOSP WIDE & 0.122\\
READM 30 PN & 0.074\\
PSI 4 SURG COMP & 0.018\\
PSI 90 SAFETY & 0.003\\
MORT 30 HF & 0.116\\
MORT 30 PN & 0.121\\
MORT 30 AMI & 0.072\\
SPP & 0.053\\
OP 10 & 0.045\\
OP 11 & 0.080\\
OP 13 & 0.003\\
AMI 10 & 0.069\\
HF 1 & 0.058\\
HF 3 & 0.136\\
STK 1 & 0.136\\
STK 5 & 0.040\\
STK 6 & 0.156\\
STK 8 & 0.059\\
STK 10 & 0.083\\
VTE 2 & 0.183\\
VTE 3 & 0.126\\
VTE 4 & 0.002\\
PN 6 & 0.112\\
SCIP CAR & 0.074\\
IMM 2 & 0.239\\
OP 6 & 0.077\\
OP 7 & 0.009\\
OP 21 & 0.028\\
PC 01 & 0.108\\
H CLEAN HSP SN & 0.171\\
H COMP 1 SN & 0.141\\
H COMP 2 SN & 0.082\\
H COMP 5 SN & 0.119\\
H COMP 6 Y & 0.156\\
H HSP RATING 7 8 & 0.185\\
H RECMND DN & 0.176\\
H RECMND PY & 0.123\\

 \hline
 \end{tabular}
 \caption{Significant features from Non Negative Least Squares with Quality Function as the Dependent Variable}\label{tab:strongNNLSMetrics}
\end{centering}
\end{table}

 \section{Conclusion}
 
 We introduced an algorithm for generating new metrics and diffusion embeddings based off of expert ranking.  Our algorithm incorporates both data point geometry via hierarchical diffusion geometry and non-linear meta-features via stacked neural nets.  The resulting embedding and rankings represent a propagation of the expert rankings to all data points, and the resulting metric generated by the stacked neural net gives a Lipschitz representation with respect to Euclidean distance that learns important and irrelevant features according to the expert opinions and in automated fashion.
 
 Although the ranking algorithm seems tied to the process of expert rankings of hospitals, the underlying idea of generating metrics in the form of \eqref{eq:rho}. and propagating first pass rankings to the rest of the data, is quite general.  For example, this method could be used to propagate sparsely labeled data to the rest of the dataset, with expert bin rankings replaced by the mode of the labelled data in the bin.

 This method also touches on the importance of incorporating data point organization into the neural net framework when dealing with smaller data sets and noisy features.  Without the expert driven function to roughly organize the data points, the stacked autoencoder fails to determine any relevant features for separation, as shown in Figure \ref{fig:saeNoBackprop}.  
 
 We will examine the implications of our hospital ratings on health policy, as well as discuss the various types of hospitals in our embedding, in a future paper.  Future work will also examine further examination of the influence of data point organization on neural nets and the generation of meta-features.  Also, it would be interesting to examine other applications of propagation of qualitative rankings and measures.

\section*{Acknowledgments}
The authors would like Arjun K. Venkatesh MD, MBA, MHS and Elizabeth E. Drye MD, SM for helping to develop the initial rankings of the pseudohospitals, Uri Shaham for use of his deep learning code, and Ali Haddad for providing the base code for the questionnaire.  Alexander Cloninger is supported by NSF Award No. DMS-1402254.

\newpage

\bibliographystyle{plain.bst}	
\bibliography{Bibliography.bib}		

\end{document}